\let\chapter\undefined
\theoremstyle{definition}
\newtheorem{definition}{Definition}
\theoremstyle{plain}
\newtheorem{theorem}{Theorem}
\newtheorem{proposition}{Proposition}
\newtheorem{lemma}{Lemma}
\theoremstyle{remark}
\newtheorem{example}{Example}
\newcommand{\textquoted}[1]{\textquotedblleft #1\textquotedblright{}}
\newcommand{\Angle}[1]{\langle #1 \rangle}
\newcommand{\Expectation}[2]{E_{#1}\left[#2\right]}
\newcommand{\NOT}{\neg}
\newcommand{\AND}{\wedge}
\newcommand{\OR}{\vee}
\newcommand{\DEC}{\diamond}
\newcommand{\OBDDC}[2]{OBDD$[\AND_{#1}]_{#2}$}
\newcommand{\OBDDL}[1]{OBDD-$L_{#1}$}
\newcommand{\SampleDNNF}{\textsc{SampleDNNF}}
\newcommand{\RandomPart}{\textsc{RandomPart}}
\newcommand{\MicroKC}{\textsc{MicroKC}}
\newcommand{\PartialKC}{\textsc{PartialKC}}
\newcommand{\PartialOBDDL}{\textsc{PartialOBDDL}}
\title{Approximate Model Counting by Partial Knowledge Compilation}
\author{ {\bf Yong Lai \thanks{Key Laboratory of Symbolic Computation and Knowledge Engineering of Ministry of Education, Changchun, China.}} \\
College of Computer Science and Technology \\
Jilin University \\
Changchun 130012, China \\
}
\begin{document}

\maketitle

\begin{abstract}
Model counting is the problem of computing the number of satisfying assignments of a given propositional formula.
Although exact model counters can be naturally furnished by most of the knowledge compilation (KC) methods, in practice, they fail to generate the compiled results for the exact counting of models for certain formulas due to the explosion in sizes.
Decision-DNNF is an important KC language that captures most of the practical compilers.
We propose a generalized Decision-DNNF (referred to as partial Decision-DNNF) via introducing a class of new leaf vertices (called unknown vertices), and then propose an algorithm called \PartialKC{} to generate randomly partial Decision-DNNF formulas from the given formulas.
An unbiased estimate of the model number can be computed via a randomly partial Decision-DNNF formula.
Each calling of \PartialKC{} consists of multiple callings of \MicroKC{}, while each of the latter callings is a process of importance sampling equipped with KC technologies. The experimental results show that \PartialKC{} is more accurate than both SampleSearch and SearchTreeSampler, \PartialKC{} scales better than SearchTreeSampler, and the KC technologies can obviously accelerate sampling.
\end{abstract}

\section{Introduction}
\label{sec:intro}

Knowledge compilation (KC) is concerned with converting general types of knowledge into tractable forms, which allows certain hard reasoning tasks to be performed efficiently on the compiled forms \cite{Selman:Kautz:96,Darwiche:Marquis:02,Cadoli:Donini:97}.
Model counting is the problem of computing the number of satisfying assignments of a given propositional formula, which is instrumental to the effective performance of probabilistic inference \cite{Roth:96,Bacchus::etal:03,Sang:etal:05,Chavira:Darwiche:08}.
It is well known that the exact model count computing is a \#P-complete problem.
According to the idea of KC, for a propositional language supporting tractable model counting, one can first compile a formula into the target language, and then count the models of the compiling result in polytime (see e.g., \cite{Koriche:etal:13}).

A common deficiency of all KC approaches is the explosion in sizes for certain types of formulas in practice, and thus we cannot generate the compiled results to count the exact model numbers of such types of formulas. In particular, a compiler is often relatively inefficient in the conjunctive normal form (CNF) formulas with high treewidths. This brings the issue of seeking an alternative solution via a partial compilation of hard CNF formulas, with a further estimation of a model number based on the partial compiled results obtained.

Decomposable Negation Normal Form (DNNF) \cite{Darwiche:01a} is an influential KC language which includes Ordered Binary Decision Diagrams (OBDDs) \cite{Bryant:86}, Sentential Decision Diagrams (SDDs) \cite{Darwiche:11}, OBDDs with the conjunctive decomposition (\OBDDC{}{}) \cite{JAIR17}. Decision-DNNF \cite{Oztok:Darwiche:14} is a subset of DNNF that captures most of the practical knowledge compilers, including c2d \cite{Darwiche:04}, \textsc{Dsharp} \cite{Dsharp}, BDDjLu \cite{KAIS12}, and D4 \cite{D4}.
A generalization of Decision-DNNF (hereinafter referred to as partial Decision-DNNF) is proposed in this study by introducing a class of new leaf vertices (the so-called unknown vertices).
Each unknown vertex represents a sub-formula (conditioned on some partial assignment) that is not compiled yet.
We propose an algorithm called \PartialKC{} to generate a randomly partial Decision-DNNF.
We can compute an unbiased estimate of model number using the randomly partial Decision-DNNF.
Each calling of \PartialKC{} consists of multiple callings of \MicroKC{}.
Each of the latter callings is treated as a process of importance sampling equipped with KC technologies, including dynamic decomposition, non-chronological backtracking, component caching, and backbone-based simplification \cite{Huang:Darwiche:07,KAIS12}.
The proposal distribution chosen in \MicroKC{} is based on the partial Decision-DNNF formula itself.
To ensure that each partial Decision-DNNF formula can represent a joint distribution, its decision vertices are labeled with certain probabilities.
The probabilities are estimated by a DPLL-based algorithm which generates special partial Decision-DNNF formulas.

Many approximate model counters are based on the Monte Carlo methods (see e.g., \cite{Gomes:etal:07,Gogate:Dechter:11,Ermon:etal:13,ApproxMC}).
Such counters can be classified into three basic categories \cite{ApproxMC}.
Let $\varphi$ be a formula with $Z$ models.
A counter of the first category is parameterized by $(\varepsilon, \delta)$, and computes a model number of $\varphi$ that lies in the interval $[(1 + \varepsilon)^{-1}Z, (1 + \varepsilon)Z]$ with confidence at least $1 - \delta$.
A counter of the second category is parameterized by $\delta$, and computes a lower (or upper) bound of $Z$ with confidence at least $1 - \delta$. The counters of the third category provide fewer guarantees, but offer proper approximations in practice.
These counters often scales better than the ones in the first category.
The Markov's inequality application allows one to convert the majority of third-category counters into those of the second category \cite{Gomes:etal:07}.
SampleSearch \cite{Gogate:Dechter:11} and SearchTreeSampler \cite{Ermon:etal:13} are the state-of-the-art counters of the third category.
The former scales better while the latter often provides more accurate estimates.
\PartialKC{} also falls into the third category.
The experimental results show that \PartialKC{} is more accurate than both SampleSearch and SearchTreeSampler, and scales better than SearchTreeSampler.

\paragraph{Related Work.}
This work is closely related to the study \cite{Gogate:Dechter:12}.
For many KC languages, each model of a formula can be seen a particle of the corresponding compiled result.
Therefore, the authors \cite{Gogate:Dechter:12} used the generated models (i.e., samples) to construct an AND/OR sample graph, and then used it to estimate the model number of a CNF formula.
Each AND/OR sample graph can be treated as a partial compiled result in AOBDD (binary version of AND/OR Multi-Valued Decision Diagram \cite{Mateescu:etal:08}).
It was theoretically proved in \cite{Gogate:Dechter:12} that the estimate variance of the partial AOBDD is smaller than that of the mean of samples.
The proposed \PartialKC{} approach has two main differences from that of \cite{Gogate:Dechter:12}.
Firstly, the latter approach envisages an independent generation of each sample, while the KC technologies used in \PartialKC{} can accelerate the sampling (and thus the convergence), which fact is experimentally verified in this study.
Secondly, the decomposition used by the partial AOBDD in \cite{Gogate:Dechter:12} is static, while that used by partial Decision-DNNF is dynamic, which  is known to be more effective in the KC field than the static one \cite{Dsharp,D4}.

\section{Preliminaries}
\label{sec:preli}

This paper uses $x$ to denote a propositional or Boolean variable, and $X$ to denote a set of variables.
To simplify notations, a singleton set is assimilated, in some cases, with its unique element.
A formula is constructed from constants $0$, $1$ and variables using negation operator $\NOT$, conjunction operator $\AND$, and other logical operators that can be defined using $\NOT$ and $\AND$.
For example, the disjunction operator $\OR$ and decision operator $\DEC$ can be defined as $\varphi \vee \psi = \NOT( \NOT\varphi \wedge \NOT\psi)$ and $\varphi \DEC_x \psi = (\NOT x \AND \varphi) \OR (x \AND \psi)$, respectively.
As applied  to formula $\varphi$, we use $Vars(\varphi)$ to denote the set of variables appearing in $\varphi$.

An assignment $\omega$ over variable set $X$ is a mapping from $X$ to $\{0, 1\}$, and the set of all assignments over $X$ is denoted by $2^X$. Given any formula $\varphi$ and assignment $\omega$ over a superset of $Vars(\varphi)$, $\omega$ satisfies $\varphi$ (denoted by $\omega \models \varphi$) iff one of the following conditions holds: $\varphi = 1$; $\varphi = x$ and $\omega(x) = 1$; $\varphi = \NOT \psi$ and $\omega \not\models \psi$; or $\varphi = \psi \AND \psi'$, $\omega \models \psi$ and $\omega \models \psi'$.
A formula is satisfiable if it has at least one model, and it is unsatisfiable otherwise.
The number of models of formula $\varphi$ over $X$ is denoted by $Z_X(\varphi)$, and $X$ is sometimes omitted in an explicit context.
The marginal probability of $\varphi$ over a variable $x \in X$ is the ratio of $Z_X(\varphi \AND x)$ to $Z_X(\varphi)$.
Model counting, also known as \#SAT, concerns counting the model number of a given formula.
Given two formulas $\varphi$ and $\psi$, $\varphi$ is equivalent to $\psi$ (denoted by $\varphi \equiv \psi$) iff the model set of $\varphi$ and that of $\psi$ over $Vars(\varphi) \cup Vars(\psi)$ are equal to each other.
The conditioning of formula $\varphi$ on assignment $\omega$, denoted by $ \varphi |_\omega$, is a formula obtained by replacing each $x$ in $\varphi$ with 1 (resp. 0) if $x=1 \in \omega$ (resp. $x = 0 \in \omega$).

A literal is either a variable $x$ or its negation $\NOT x$. Given a literal $l$, its negation $\NOT l$ is $\NOT x$ if $l$ is $x$, and $\NOT l$ is $x$ otherwise.
A literal $l$ is called an \emph{implied literal} of formula $\varphi$ if $\varphi \models l$; and an implied literal can be used to simplify $\varphi$ by conditioning.
A clause $\delta$ is a set of literals representing their disjunction. A formula in conjunctive normal form (CNF) is a set of clauses representing their conjunction.

A formula in \emph{negation normal form} (NNF) is constructed from 0, 1 and literals using only conjoining and disjoining operators. An NNF formula can be represented as a rooted, directed acyclic graph (DAG) where each leaf vertex is labeled with $0$, $1$ or literal; and each internal vertex is labeled with $\AND$ or $\OR$ and can have arbitrarily many children. For an NNF vertex $v$, we sometimes use $\vartheta_v$ to denote the NNF formula rooted at $v$. An NNF formula $\vartheta$ is \emph{decomposable} (DNNF) if for each $\AND$-vertex $u$ in $\vartheta$, the sub-formulas rooted at the children $Ch(u)$ do not share variables.

\section{Full Decision-DNNF and Partial Decision-DNNF}
\label{sec:DNNF}

Decision-DNNF \cite{Oztok:Darwiche:14} is a subset of DNNF that captures most of the practical knowledge compilers, and supports tractable model counting.
Therefore, each Decision-DNNF compiler directly provide an exact model counter.
A well known problem of Decision-DNNF is the explosion in sizes for certain types of formulas in practice.
In other words, we cannot use the compiled results to count the exact model numbers of such hard formulas.
In particular, the size of a Decision-DNNF formula often explodes when representing a CNF formula with high treewidth.
We propose a generalization called \emph{partial Decision-DNNF} to overcome this problem.
For notational convenience, a Decision-DNNF formula is also referred to as a \emph{full} Decision-DNNF formula.
In this section, we present the notions and properties of full and partial Decision-DNNF.

\subsection{Full Decision-DNNF}
\label{sec:DNNF:full}

We first present the notion of full Decision-DNNF, and we can use a full Decision-DNNF formula to represent a joint probability distribution by labeling some arcs with probabilities:
\begin{definition}[full Decision-DNNF]
A full Decision-DNNF formula is a rooted DAG.
Each vertex $v$ is labeled with a symbol $sym(v)$. If $v$ is a leaf, $sym(v) = \bot$ or $\top$. Otherwise, $sym(v)$ is a variable (in which case $v$ is called a \emph{decision vertex}) or operator $\AND$ (called a \emph{decomposition vertex}).
Each internal vertex $v$ has a set of children $Ch(v)$. For a decision vertex, $Ch(v) = \{ch_0(u), ch_1(u)\}$, where $lo(v)$ and $hi(v)$ are called \emph{low} and \emph{high} children, and are connected by dashed and solid arcs, respectively; for a decomposition vertex, the subgraphs rooted at children share no variables.
Each arc from a decision vertex $v$ is labeled with an estimated marginal probability $p_b(v)$ , where $b = 0 \text{ or } 1$ means the arc is dashed or solid.
For a decision vertex $v$, $p_0(v) + p_1(v) = 1$, and $p_b(v) = 0$ iff $sym(ch_b(v)) = \bot$.
\end{definition}

Figure \ref{fig:DecDNNF:a} depicts a full Decision-DNNF. Hereafter we denote a leaf by $\Angle{\bot}$ or $\Angle{\top}$, a decomposition vertex by $\Angle{\AND, Ch(v)}$, and a decision vertex by $\Angle{sym(v), ch_0(v), ch_1(v)}$.
For simplicity, we sometimes use $\Angle{x}$ to denote $\Angle{x, \Angle{\bot}, \Angle{\top}}$; $\Angle{\NOT x}$ to denote $\Angle{x, \Angle{\top}, \Angle{\bot}}$; and a full Decision-DNNF formula rooted at $u$ by $\vartheta_u$.
Each leaf $\Angle{\bot}$ or $\Angle{\top}$ represent 0 or 1.
Each decision vertex $v$ labeled with $x$ represents formula $\varphi \DEC_x \psi$ where $\varphi$ and $\psi$ represent the formulas rooted at $ch_0(v)$ and $ch_1(v)$, respectively, and $x$ appears in neither $\varphi$ nor $\psi$.
Each decomposition vertex represents a conjunction of the formulas rooted at its children.
It is easy to see that Decision-DNNF is a subset of DNNF, and each decision vertex does not have two constants 0 as its children.

Given a Decision-DNNF vertex $u$ over $X$ and a vertex $v$ in $\vartheta_u$, we use $Z(v)$ to denote the number of models of $\vartheta_v$ over $X$.
Then $Z(u)$ can be recursively counted in linear time ($c = 2^{(1- |Ch(u)|) \cdot |X|}$):
\begin{equation}\label{eq:sharp-DecDNNF}
Z(u) = \begin{cases}
0 \text{ or } 2^{|X|} & \text{$u$ is a leaf}; \\
c \cdot \prod_{v \in Ch(u)}Z(v) & \text{$u$ is a $\AND$-vertex};  \\
\dfrac{Z(ch_0(u)) + Z(ch_1(u))}{2} & {\text{otherwise.}}
\end{cases}
\end{equation}

A Decision-DNNF formula $\vartheta$ is an \emph{Ordered Binary Decision Diagram with implied Literals} (\OBDDL{}) \cite{KAIS12} over a linear order $\prec$ of variables if it satisfies two conditions:
for each decomposition vertex in $\vartheta$, there exists at most one child representing a non-literal;
for each decision vertex $v$ and its decision descendant $w$, the variable labeled on $v$ is less than the one labeled on $w$ over $\prec$.

We can use a Decision-DNNF formula to represent a joint probability distribution. Let $u$ be a Decision-DNNF vertex over $X$. For an assignment $\omega$ over $X$, $u$ defines the distribution as follows:
\begin{compactitem}[$\bullet$]
\item If $u$ is 0, then $Pr_u(\omega) = 0$;
\item If $u$ is 1, then $Pr_u(\omega) = 2^{-|X|}$;
\item If $u$ is a decision vertex, then $Pr_u(\omega) = 2 \cdot p_b(u) \cdot Pr_{ch_b(u)}(\omega)$, where $x = b \in \omega$; and
\item If $u$ is a $\AND$-vertex, then $Pr_u(\omega) = 2^{(|Ch(u)| - 1) \cdot |X|} \cdot \prod_{i=1}^{n}Pr_{v_i}(\omega)$.
\end{compactitem}
Therefore, each non-zero vertex represents a joint probability distribution over $X$.
Consider the full Decision-DNNF formula in Figure \ref{fig:DecDNNF:a} again.
We denote the root by $u$, and have that $Pr_u(x_1 = 0, x_2 = 0, x_4 = 1, x_6 = 0) = 0.24$ and $Pr_u(x_1 = 1, x_2 = 1, x_4 = 1, x_4 = 1, x_6 = 0) = 0.12$.

\begin{proposition}\label{prop:cond-pr-DecDNNF}
Let $u$ be a full Decision-DNNF vertex, let $v$ be a decision vertex in $\vartheta_u$, and let $X$ be the set of variables labeled on some path from $u$ to a parent of $v$. We have that $Pr_u(sym(v) = b|X) = p_b(v)$.
\end{proposition}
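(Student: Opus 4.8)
The plan is to read $Pr_u(sym(v)=b\mid X)$ as conditioning on the partial assignment $\omega_X$ that the chosen path induces. Writing the path as $u=w_0,w_1,\dots,w_k$ with $v$ a child of $w_k$, each decision vertex $w_i$ on the path fixes its label $sym(w_i)$ to the value of the branch taken toward $v$, and $\omega_X$ collects these fixings; I interpret a marginal in the usual way, $Pr_u(\rho)=\sum_{\omega\supseteq\rho}Pr_u(\omega)$, so that the quantity to compute is $Pr_u(sym(v)=b,\omega_X)/Pr_u(\omega_X)$. Here I write $\mathcal{X}$ for the full variable set over which $u$ is defined, to avoid clashing with the path set $X$. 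Before anything else I would record two structural observations that follow from the definitions: since the label of a decision vertex is absent from its subtree, no decision label repeats on a root-to-$v$ branch, so $sym(v)\notin X$ and in fact every variable of $X$ is absent from $\vartheta_v$; and every branch taken along the path has positive probability (its child is not $\bot$, since the path continues through it toward $v$), so the conditional is well defined.

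First I would isolate two facts by structural induction on $\vartheta_w$. The backbone lemma is that $Pr_w$ does not depend on variables outside $Vars(\vartheta_w)$ and is a genuine distribution, $\sum_\omega Pr_w(\omega)=1$; concretely $Pr_w(\omega)=2^{-(|\mathcal{X}|-|Vars(\vartheta_w)|)}\,\pi_w(\omega|_{Vars(\vartheta_w)})$ for an intrinsic distribution $\pi_w$ that is a product over children at an $\AND$-vertex. The second fact, immediate once the first is in hand, is that for any decision vertex $v$ with $sym(v)=y$ one has $Pr_v(y=b)=2\,p_b(v)\sum_{\omega:\omega(y)=b}Pr_{ch_b(v)}(\omega)=2\,p_b(v)\cdot\tfrac12=p_b(v)$, using that $y\notin Vars(\vartheta_{ch_b(v)})$ so the inner marginal is exactly $\tfrac12$.

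Next I would run an induction on the path length $k$ that peels the top vertex $w_0=u$, the base case $k=0$ (so $v$ is a child of $u$) being settled directly by the two facts above. If $u$ is a decision vertex with $sym(u)=x$ and the path descends to $w_1=ch_c(u)$, then for any event $E$ on the variables below, $Pr_u(E,\omega_X)=2\,p_c(u)\sum_{\omega\supseteq\omega_X,\,\omega\in E}Pr_{w_1}(\omega)=p_c(u)\,Pr_{w_1}(E,\omega_{X'})$ with $X'=X\setminus\{x\}$, the halving coming from the independence lemma applied to $x\notin Vars(\vartheta_{w_1})$; forming the ratio for $E=\{sym(v)=b\}$ the factor $p_c(u)$ cancels and the claim reduces to the inductive hypothesis at $w_1$. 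If $u$ is an $\AND$-vertex with children $c_1,\dots,c_n$ and the path enters $w_1=c_j$, then $\omega_X$ and $sym(v)=b$ constrain only the block $Vars(\vartheta_{c_j})$; marginalizing $Pr_u(\omega)=2^{(n-1)|\mathcal{X}|}\prod_i Pr_{c_i}(\omega)$ over the other blocks and over the variables absent from $\vartheta_u$ makes every factor with $i\neq j$ integrate to $1$ and all powers of two cancel, yielding the clean identity $Pr_u(E,\omega_X)=Pr_{c_j}(E,\omega_X)$, so again the claim reduces to the hypothesis at $c_j$.

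I expect the $\AND$-vertex step to be the main obstacle: it is where the bookkeeping of the normalizing powers $2^{(n-1)|\mathcal{X}|}$, the uniform spreading of each $Pr_{c_i}$ over its absent variables, and the disjointness of the children's variable blocks (decomposability) must be combined so that everything outside the $c_j$-block cancels exactly. Once that identity and its decision-vertex analogue are established, the assembly by induction is routine, and since the resulting value $p_b(v)$ never depends on the chosen path, the path-independence implicit in the \emph{some path} wording of the statement comes for free.
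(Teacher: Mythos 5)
The paper states Proposition~\ref{prop:cond-pr-DecDNNF} without proof (its appendix, which is commented out in the source, only proves Proposition~\ref{prop:unbiased}), so there is no official argument to compare yours against; judged on its own, your proof is correct and supplies exactly what is missing. The two preliminary lemmas are the right ones: the normalization/factorization fact $Pr_w(\omega)=2^{-(|\mathcal{X}|-|Vars(\vartheta_w)|)}\,\pi_w(\omega|_{Vars(\vartheta_w)})$ is what makes both the halving at a decision arc (since $sym(w_i)\notin Vars(\vartheta_{w_{i+1}})$) and the cancellation of the $2^{(n-1)|\mathcal{X}|}$ prefactor at a decomposition vertex go through, and the local identity $Pr_v(sym(v)=b)=p_b(v)$ is then the base of the path induction. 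The only point worth flagging, and it is minor, is that your well-definedness remark covers $\bot$-children of decision vertices but silently assumes no decomposition vertex on (or below) the path has an unsatisfiable child; this is the same implicit assumption the paper makes when it says that each non-zero vertex represents a joint distribution, so it does not constitute a gap in your argument relative to the paper's setting.
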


According to the above proposition, we can perform importance sampling with a full Decision-DNNF formula as the proposal distribution. The sampling algorithm is presented in Algorithm \ref{alg:SampleDNNF}.

\begin{algorithm}[htbp]
\caption{\SampleDNNF($u$)} \label{alg:SampleDNNF}
\KwIn{a satisfiable full Decision-DNNF vertex $u$}
\KwOut{a partial assignment $\omega$ with probability $Pr_u(\omega)$}
\lIf {$sym(u) = \top$} {\KwRet{$\emptyset$}}\;
\uElseIf {$sym(u) \in PV$} {
    Sample a Boolean value $b$ with probability $p_1(u)$\;
    \KwRet{$\{sym(u) = b\} \cup \SampleDNNF(ch_b(u))$}\;
}
\lElseIf {$sym(u) = \AND$} {$\bigcup_{v \in Ch(u)} \SampleDNNF(v)$}\;
\end{algorithm}

Let $u$ be a full Decision-DNNF vertex over $X$.
For a set of outputs $\omega_1, \ldots, \omega_N$ of Algorithm \SampleDNNF($u$), an unbiased estimate of $Z(u)$ can be derived as follows:
\begin{equation*}\label{eq:sharp-DecDNNF-Decision}
\widehat{Z}_N(u) = \frac{1}{N} \sum_{i=1}^{N} \frac{2^{|X|-|\omega_i|}}{Pr_u(\omega_i)}
\end{equation*}
If $u$ is a decision vertex, and $Q$ is the Bernoulli distribution with probability $p_1(u)$, we can rewrite the decision case of Eq. \eqref{eq:sharp-DecDNNF} as follows:
\begin{equation}\label{eq:sharp-DecDNNF-Decision}
\begin{split}
Z(u) &= \frac{Z(ch_0(u))}{2Q(x=0)}Q(x=0) + \frac{Z(ch_1(u))}{2Q(x=1)}Q(x=1)\\
     &= \Expectation{Q}{\frac{Z(ch_x(u))}{2Q(x)}}
\end{split}
\end{equation}
Therefore, if there are two unbiased estimates $\widehat{Z}(ch_0(u))$ and $\widehat{Z}(ch_1(u))$ of $Z(ch_0(u))$ and $Z(ch_1(u))$, an unbiased estimate of $Z(u)$ can be obtained from a set of samples from $Q$. Similarly, if $u$ is a $\AND$-vertex and we have an unbiased estimate $\widehat{Z}(v)$ of $Z(v)$ for each child $v$, then $2^{(1- |Ch(u)|) \cdot |X|} \cdot \prod_{v \in Ch(u)}\widehat{Z}(v)$ is an unbiased estimate of $Z(u)$.

\begin{figure}[ht]
  \centering
  \subfloat[]{\label{fig:DecDNNF:a}
    \centering
    \begin{minipage}[c]{0.44\linewidth}
        \centering
        \includegraphics[width = \textwidth]{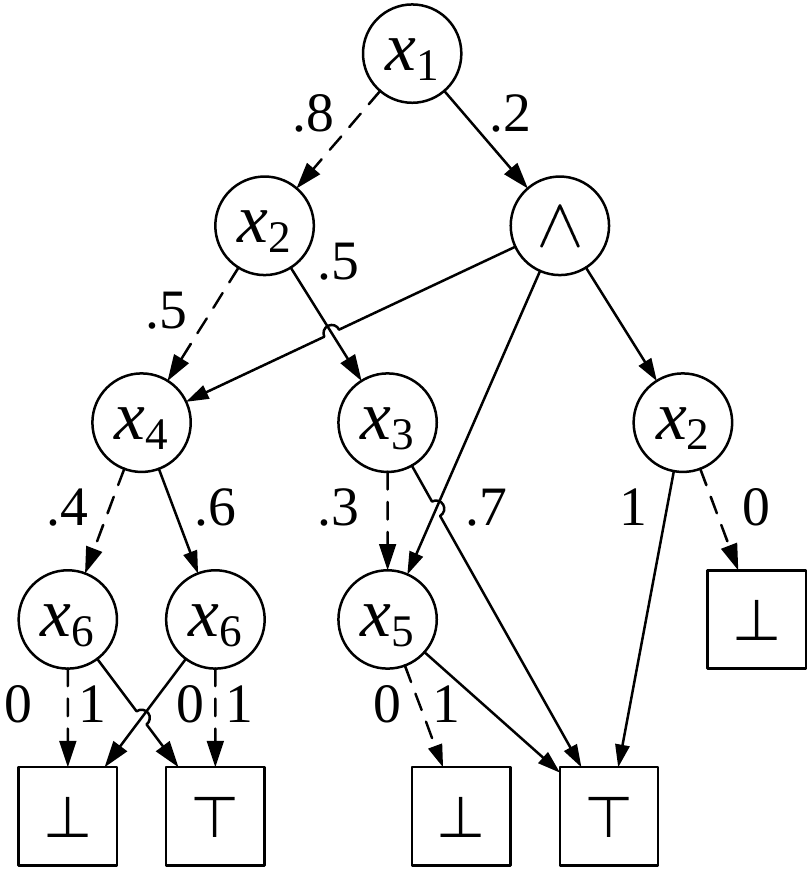}
    \end{minipage}
  }
  \begin{minipage}[c]{0.02\linewidth}
    \hspace{2mm}
  \end{minipage}
  \subfloat[]{\label{fig:DecDNNF:b}
    \centering
    \begin{minipage}[c]{0.48\linewidth}
        \centering
        \includegraphics[width = \textwidth]{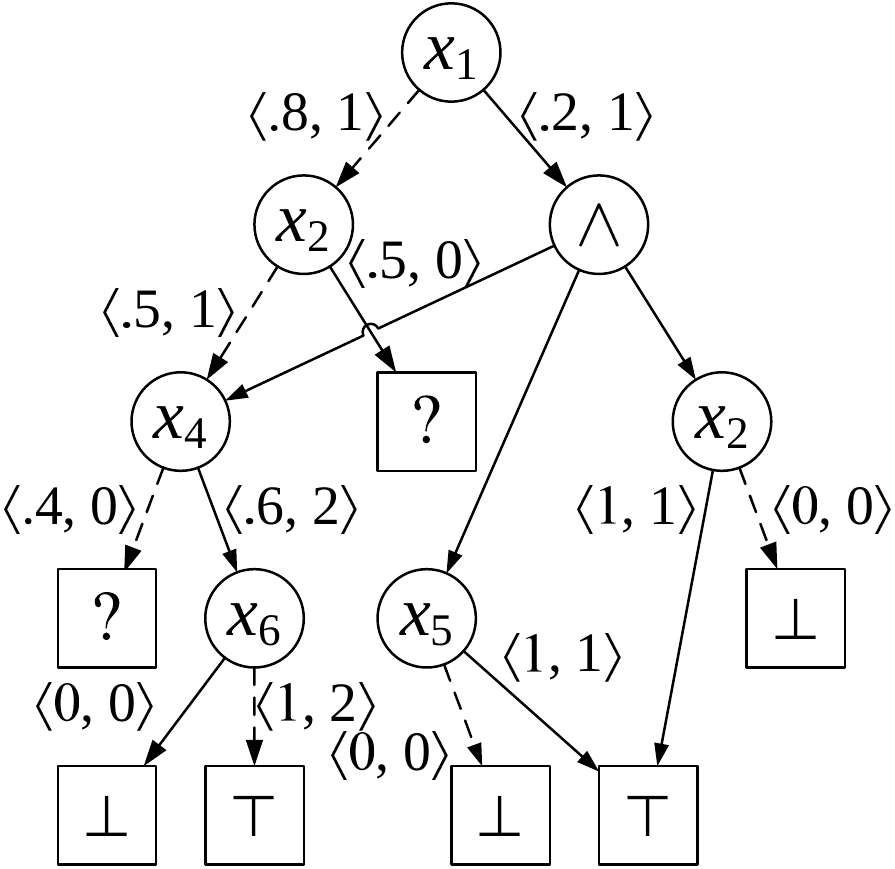}
    \end{minipage}
  }
  \caption{A full Decision-DNNF (a) and a partial Decision-DNNF (b)}\label{fig:DecDNNF}
\end{figure}

\subsection{Partial Decision-DNNF}
\label{sec:DNNF:partial}

A natural idea for overcoming the size explosion of full Decision-DNNF is to prune some sub-graphs in Decision-DNNF. We expect that the remaining vertices still preserve sufficient information to obtain a good estimate of the model number. We replace the pruned sub-graphs in a Decision-DNNF formula with a new type of vertices called \emph{unknown vertices}, and call the resulting DAG a partial Decision-DNNF formula.

\begin{definition}[Partial Decision-DNNF]\label{def:PDecDNNF}
Partial Decision-DNNF is a generalization of full Decision-DNNF by adding a type of new leaf vertices labeled with $?$. Each arc from a decision vertex $v$ is labeled by a pair $\Angle{p_b(v), f_b(v)}$ of estimated marginal probability and visiting frequency, where $b = 0 \text{ or } 1$ means the arc is dashed or solid. For a decision vertices $v$, $p_0(v) + p_1(v) = 1$; $p_c(v) = 0$ iff $sym(ch_b(v)) = \bot$; and $f_b(v) = 0$ iff $sym(ch_b(v)) = ?$.
\end{definition}

Figure \ref{fig:DecDNNF:b} depicts a partial Decision-DNNF.
Hereafter we use $\Angle{?}$ to denote an unknown vertex.
For simplicity, we sometimes use $f(w)$ and $p(w)$ to denote $\Angle{f_0(w), f_1(w)}$ and $\Angle{p_0(w), p_1(w)}$ for each decision vertex $w$.
We can establish a part-whole relationship between partial and full Decision-DNNF formulas:

\begin{definition}\label{def:part}
Let $u$ and $u'$ be partial and full Decision-DNNF vertices, respectively. $\vartheta_u$ is a \emph{part} of $\vartheta_{u'}$ iff $u$ is an unknown vertex, or the following conditions hold:
\begin{compactenum}[(a)]
\item If $u' = \Angle{\bot} \text{ or } \Angle{\top}$, then $u = \Angle{\bot} \text{ or } \Angle{\top}$;
\item If $u' = \Angle{x, ch_0(u), ch_1(u)}$, then $sym(u) = x$, and the partial Decision-DNNF formulas rooted at $ch_0(u)$ and $ch_1(u)$ are parts of the Decision-DNNF formulas rooted at $ch_0(u')$ and $ch_1(u')$, respectively; and
\item If $u' = \Angle{\AND, Ch(u')}$, then $sym(u) = \AND$, $|Ch(u)| = |Ch(u')|$, and each partial Decision-DNNF formula rooted at some child of $u$ is exactly a part of one Decision-DNNF formula rooted at some child of $u'$.
\end{compactenum}
\end{definition}

The partial Decision-DNNF formula depicted in Figure \ref{fig:DecDNNF:b} is a part of the full Decision-DNNF formula depicted in Figure \ref{fig:DecDNNF:a}.
Given a partial Decision-DNNF formula rooted at $u$ that is a part of Decision-DNNF formula rooted at $u'$, the above definition establishes a mapping from the vertices of $\vartheta_u$ to those of $\vartheta_{u'}$. For a vertex $v$ in $\vartheta_u$ corresponding to another vertex $v'$ in $\vartheta_{u'}$ under this mapping, we say that $\Angle{v, v'}$ is a \emph{part-whole pair} of vertices. A full Decision-DNNF formula can be seen as a compact representation for a model set $\Omega$, while a part of this Decision-DNNF formula represents a subset of $\Omega$. In other words, we can use a part of a full Decision-DNNF formula to estimate its model number.
Firstly, a partial Decision-DNNF formula can be used to compute an exact lower (upper) bound of model number:

\begin{proposition}\label{prop:exact-bound}
Let $u$ and $u'$ be, respectively, a partial Decision-DNNF vertex and a full Decision-DNNF vertex over $X$ such that $\vartheta_u$ is a part of $\vartheta_{u'}$. For each unknown vertex $v$ in $\vartheta_u$ corresponding to $v'$ in $\vartheta_{u'}$ under the part-whole mapping, we have a lower (upper) bound $\widetilde{Z}(v)$ of $Z(v')$. A lower (upper) bound of $Z(u')$ can be recursively computed by case analysis in linear time:
\begin{compactenum}[(a)]
\item If $u = \Angle{\bot} \text{ or } \Angle{\top}$, then $\widetilde{Z}(u) = 0 \text{ or } 2^{|X|}$;
\item If $u = \Angle{sym(u), ch_0(u), ch_1(u)}$, then $\widetilde{Z}(u)=\frac{1}{2}\cdot\widetilde{Z}(ch_0(u))+\frac{1}{2}\cdot\widetilde{Z}(ch_1(u))$; and
\item If $u = \Angle{\AND, Ch(u)}$, then $\widetilde{Z}(u) = 2^{(1- |Ch(u)|) \cdot |X|} \prod_{v \in Ch(u)}\widetilde{Z}(v)$.
\end{compactenum}
\end{proposition}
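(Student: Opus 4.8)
The plan is to prove the two readings (lower bound, upper bound) simultaneously by structural induction on the partial Decision-DNNF $\vartheta_u$, following the recursive shape of the part-whole mapping in Definition \ref{def:part}. The invariant I would maintain along every part-whole pair $\Angle{v, v'}$ is $\widetilde{Z}(v) \le Z(v')$ for the lower-bound reading and $\widetilde{Z}(v) \ge Z(v')$ for the upper-bound reading. Since the three clauses of the proposition mirror the three non-unknown clauses of Definition \ref{def:part}, the induction splits exactly along them, and the linear running time is then immediate from the fact that the recursion visits each vertex of $\vartheta_u$ once and does work proportional to its number of children.

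For the base cases, when $u$ is an unknown vertex the required inequality is precisely the hypothesis of the proposition, since $\widetilde{Z}(u)$ is assumed to be a lower (upper) bound of $Z(u')$. When $u$ is a constant leaf, Definition \ref{def:part}(a) together with the fact that $u$ is not unknown forces $u'$ to be the same constant (a decision or $\AND$ vertex $u'$ would force $sym(u)$ to be a variable or $\AND$ by clauses (b), (c)), so $\widetilde{Z}(u) = Z(u') \in \{0, 2^{|X|}\}$ and clause (a) of the proposition holds with equality.

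For the decision case (clause (b)), Definition \ref{def:part}(b) gives $u' = \Angle{x, ch_0(u'), ch_1(u')}$ with each $ch_b(u)$ a part of $ch_b(u')$, so the induction hypothesis yields $\widetilde{Z}(ch_b(u)) \le Z(ch_b(u'))$ for $b \in \{0,1\}$. Combining this with the decision line of \eqref{eq:sharp-DecDNNF}, namely $Z(u') = \tfrac12\bigl(Z(ch_0(u')) + Z(ch_1(u'))\bigr)$, and using that $(a,b)\mapsto\tfrac12(a+b)$ is nondecreasing in each argument, I get $\widetilde{Z}(u) \le Z(u')$; the upper-bound version is identical with the inequalities reversed. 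For the decomposition case (clause (c)), Definition \ref{def:part}(c) supplies a bijection between $Ch(u)$ and $Ch(u')$ with $|Ch(u)| = |Ch(u')|$ along which each child is a part of its image, so the constant $c = 2^{(1-|Ch(u)|)\cdot|X|}$ appearing in $\widetilde{Z}(u)$ coincides with the one in the $\AND$-line of \eqref{eq:sharp-DecDNNF}, and the induction hypothesis gives $\widetilde{Z}(v) \le Z(v')$ for each matched pair $\Angle{v,v'}$.

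The hard part, and really the only place that needs care, is closing the decomposition case: passing from the per-child inequalities to $\prod_{v}\widetilde{Z}(v) \le \prod_{v'}Z(v')$ relies on monotonicity of the product, which is valid only because every factor is nonnegative. I would therefore strengthen the induction hypothesis to $0 \le \widetilde{Z}(v) \le Z(v')$ in the lower-bound reading (and $0 \le Z(v') \le \widetilde{Z}(v)$ in the upper-bound reading); nonnegativity of $Z(\cdot)$ is immediate from \eqref{eq:sharp-DecDNNF}, while nonnegativity of $\widetilde{Z}(\cdot)$ propagates through the averaging and product steps once it is assumed at the unknown leaves. Multiplying these nonnegative inequalities across the matched children and rescaling by the common factor $c$ then closes the induction, and the one-pass nature of the recursion gives the claimed linear time.
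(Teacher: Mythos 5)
Your proof is correct and is exactly the argument the paper intends (the paper states this proposition without an explicit proof, but the case analysis in the statement is clearly meant to be justified by precisely this structural induction mirroring Eq.~\eqref{eq:sharp-DecDNNF}, with the unknown-vertex bounds as the base case and monotonicity of averaging and of products of nonnegative quantities closing the inductive steps). Your explicit strengthening of the invariant to include nonnegativity in the $\AND$-case is a point the paper glosses over entirely, and your reading of Definition~\ref{def:part}(a) as forcing matching constants is the intended one, so no further changes are needed.
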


Our main aim is to compute an unbiased estimate of the model number.
For a full Decision-DNNF vertex $u'$, we mentioned that we can compute an unbiased estimate of $Z(u')$ using a set of partial assignments output by \SampleDNNF($u'$).
Actually, we can also obtain an unbiased estimate of $Z(u')$ from some type of partial Decision-DNNFs of $\vartheta_{u'}$ called \emph{randomly} partial Decision-DNNF.
We propose an algorithm called \RandomPart{} in Algorithm \ref{alg:RandomPart} to generate a randomly partial Decision-DNNF based on \SampleDNNF{}.
Firstly, we initialize visiting number and estimate marginal probability for each decision vertex on Lines 1--7.
Secondly, we sample $N$ times and record the visited decision vertices on Lines 8--14.
$(sym(v), b) \in \omega$ on Line 12 means that the arc from $v$ with direction $b$ is visited in calling \SampleDNNF{}.
Obviously, we can update $f_b(v)$ in the Algorithm \SampleDNNF{} instead of using an extra loop to update them on Line 15--17.
Finally, we replace the unvisited vertices with unknown ones.

\begin{algorithm}[htbp]
\caption{\RandomPart($u$, $N$)} \label{alg:RandomPart}
\KwIn{a Decision-DNNF vertex $u$}
\KwOut{a randomly partial Decision-DNNF vertex corresponding to $u$}
\For {each decision vertex $v$ in $\vartheta_u$}{
    $f_0(v) \leftarrow f_1(v) \leftarrow 0$\;
    \lIf {$sym(ch_0(v)) = \bot$} {$p_0(v) \leftarrow 0$}\;
    \lElseIf {$sym(ch_1(v)) = \bot$} {$p_0(v) \leftarrow 1$}\;
    \lElse {let $p_0(v)$ be an estimate of marginal probability of $\vartheta_v$ over $sym(v)$}\;
    $p_1(v) \leftarrow 1 - p_0(v)$\;
}
\For {$i=1$ to $N$}{
    $\omega \leftarrow \SampleDNNF(u)$\;
    Let $V$ be the set of decision vertices visited in the current calling of $\SampleDNNF(u)$\;
    \For {each vertex $v \in V$ and each Boolean value $b$}{
        \lIf {$(sym(v), b) \in \omega$} {$f_b(v) \leftarrow f_b(v) + 1$}\;
    }
}
\For {each decision vertex $v$ and each Boolean value $b$}{
    \lIf {$f_b(v) = 0$} {substitute $\Angle{?}$ for $ch_b(v)$ in $\vartheta_u$}\;
}
\KwRet $u$
\end{algorithm}

We can use a randomly partial Decision-DNNF to compute an unbiased estimate of the model number of the corresponding full Decision-DNNF. Note that we assume that 0 divided by 0 equals 0.

\begin{proposition}\label{prop:unbiased}
Let $\Angle{u, u'}$ be a part-whole pair of Decision-DNNF vertices over $X$, where $u$ is randomly partial Decision-DNNF generated by calling \PartialKC($u'$, $N$). We can recursively compute an unbiased estimate of $Z(u')$ in linear time by case analysis (for each unknown vertex $v$ in $\vartheta_u$, we assume that $\widehat{Z}(v)$ equals a random value $z$, which does not impact the value of $\widehat{Z}(u)$):
\begin{compactenum}[(a)]
\item if $u = \Angle{\bot} \text{ or } \Angle{\top}$, then $\widehat{Z}(u) = 0 \text{ or } 2^{|X|}$;
\item if $u = \Angle{sym(u), ch_0(u), ch_1(u)}$, then
\begin{align*}
\widehat{Z}(u)&=\frac{\widehat{Z}(ch_0(u)) \cdot f_0(u)}{2p_0(u) \cdot (f_0(u) + f_1(u))}+\\
&\mathrel{\phantom{=}}\frac{\widehat{Z}(ch_1(u))\cdot f_1(u)}{2p_1(u) \cdot (f_0(u) + f_1(u))}
\end{align*}; and
\item if $u = \Angle{\AND, Ch(u)}$, then $\widehat{Z}(u) = 2^{(|Ch(u)| - 1) \cdot |X|} \cdot \prod_{v \in Ch(u)}\widehat{Z}(v)$.
\end{compactenum}
\end{proposition}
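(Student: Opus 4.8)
The plan is to prove the stronger, \emph{conditional} statement by structural induction on the full vertex $u'$: for every integer $m \ge 1$, if exactly $m$ independent runs of \SampleDNNF{} enter $u'$, then the estimate $\widehat{Z}(u)$ assembled from these $m$ runs (together with the data-dependent pruning of unvisited branches to $\Angle{?}$) satisfies $\mathbb{E}[\widehat{Z}(u)] = Z(u')$. Conditioning on the number $m$ of entering samples, rather than fixing $m=N$, is what makes the induction close, since a child vertex receives a random number of the samples that reach its parent. Before the induction I would dispose of well-definedness: an unknown vertex is created only as a child $ch_b(w)$ of a decision vertex $w$ with $f_b(w)=0$, so in case (b) its arbitrary value $z$ is always multiplied by $f_b(w)=0$ and, with the $0/0=0$ convention (needed when additionally $p_b(w)=0$), contributes nothing; hence $\widehat{Z}(u)$ is independent of the $z$'s, as claimed.

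The leaf cases (a) are immediate and deterministic. For the decision case (b), fix $m \ge 1$ samples entering $u$; each independently takes branch $b$ with probability $p_b(u)$, so $f_b(u)\mid m \sim \mathrm{Binomial}(m, p_b(u))$, and the $f_b(u)$ samples descending through $ch_b(u)$ behave as that many fresh independent runs entering $ch_b(u)$. The crux is that $f_b(u)$ is correlated with $\widehat{Z}(ch_b(u))$ (the latter is built from exactly those samples), so I cannot split the expectation of their product naively; instead I condition further on $f_b(u)=k$. For $k\ge 1$ the inductive hypothesis gives $\mathbb{E}[\widehat{Z}(ch_b(u))\mid f_b(u)=k]=Z(ch_b(u)')$, while $k=0$ corresponds to the pruned branch and is annihilated by the factor $k$. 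Summing, $\mathbb{E}[f_b(u)\,\widehat{Z}(ch_b(u))\mid m] = Z(ch_b(u)')\,\mathbb{E}[f_b(u)\mid m] = Z(ch_b(u)')\, m\, p_b(u)$, so the importance weight $\frac{1}{2p_b(u)\,m}$ cancels the $m\,p_b(u)$, leaving $\tfrac12\bigl(Z(ch_0(u)')+Z(ch_1(u)')\bigr)=Z(u')$ by the decision case of Eq.~\eqref{eq:sharp-DecDNNF}. The degenerate subcase $p_b(u)=0$, i.e.\ $ch_b(u)=\Angle{\bot}$, is consistent because then $f_b(u)=0$ and $Z(ch_b(u)')=0$.

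For the decomposition case (c), again fix $m\ge 1$ samples entering $u$; since \SampleDNNF{} recurses into every child of an $\AND$-vertex, each child receives the same $m$ runs. The key observation is that the estimates $\widehat{Z}(v)$ for distinct children are mutually independent: $\widehat{Z}(v)$ is a function only of the Bernoulli choices made inside $\vartheta_v$, and because the children share no variables these choice sets are disjoint and, across all $m$ samples, independent. Hence $\mathbb{E}\bigl[\prod_{v\in Ch(u)}\widehat{Z}(v)\mid m\bigr]=\prod_{v\in Ch(u)}\mathbb{E}[\widehat{Z}(v)\mid m]=\prod_{v\in Ch(u)}Z(v')$ by the inductive hypothesis, and multiplying by the constant recovers $Z(u')$ through the $\AND$ case of Eq.~\eqref{eq:sharp-DecDNNF}. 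Taking the total expectation over $m$ at the root (where $m=N$ is in fact deterministic) then completes the argument.

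The step I expect to be the main obstacle is reconciling this clean ``fresh independent runs'' picture with the fact that a partial Decision-DNNF is a DAG with shared sub-formulas (produced by component caching): a cached vertex $v$ may be reached through several parents, so the aggregate frequency $f_b(u)$ at one parent no longer equals the total visit count $n(v)$ that drives the single cached estimate $\widehat{Z}(v)$. I would handle this by phrasing the induction entirely in terms of the total number of samples entering each vertex, using that all runs reaching $v$ — regardless of the path taken above $v$ — are identically distributed and independent in their behavior below $v$ (the subgraph $\vartheta_v$ and its labels $p_b$ do not depend on the path above). Verifying that conditioning on these total visit counts still yields the exact cancellation in case (b) when the branch child is shared is the delicate point; for the tree-unfolded formula the argument above is already complete.
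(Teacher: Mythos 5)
Your proof takes essentially the same route as the paper's: a structural induction that conditions on visit counts, establishes $E[f_b(u)\cdot\widehat{Z}(ch_b(u))] = Z(ch_b(u))\cdot m\cdot p_b(u)$ so the importance weight $1/(2p_b(u)\, m)$ cancels, and uses variable-disjointness of the children of a $\AND$-vertex to factor the expectation in the decomposition case. The one point you explicitly leave open --- reconciling the ``fresh independent runs'' picture with shared sub-DAGs --- is precisely what the paper's proof resolves by stating a generalized lemma in which the sampling procedure is parameterized by an arbitrary sequence $S$ of entry vertices and by conditioning on the frequencies of all decision vertices outside the child's sub-DAG (these determine the induced entry sequence into it); that is the formalization of your ``total number of samples entering each vertex'' idea, with the induction closing on the number of vertices rather than on tree structure.
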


\begin{example}
Let $u$ be the root of the Decision-DNNF in Figure \ref{fig:DecDNNF:a}. The exact number of models of $\vartheta_u$ over $\{x_1, \ldots, x_6\}$ is 24. We assume that two callings of $\SampleDNNF(u)$ output two partial assignments $\{x_1 = 0, x_2 = 0, x_4 = 1, x_6 = 0\}$ and $\{x_1 = 1, x_2 = 1, x_4 = 1, x_4 = 1, x_6 = 0\}$. Then \RandomPart($u$, $2$) will output the partial Decision-DNNF in Figure \ref{fig:DecDNNF:b}. According to Proposition \ref{prop:unbiased}, we compute $\widehat{Z}(u)$ in a bottom-up way. It is obvious that for a vertex $v$ representing a literal, $\widehat{Z}(v) = 2^5$. This yields the following results (we denote $ch_0(ch_0(u))$ by $u_{00}$):

\begin{equation*}
\widehat{Z}(u_{00}) = \frac{z \times 0}{2 \times 0.4 \times 2} + \frac{2^5  \times 2}{2 \times 0.6 \times 2} = \frac{2^4}{0.6};
\end{equation*}
\begin{equation*}
\widehat{Z}(ch_0(u)) = \frac{\widehat{Z}(u_{00}) \times 1}{2 \times 0.5 \times 1} + \frac{z \times 0}{2 \times 0.5 \times 1} = \frac{2^4 }{0.6};
\end{equation*}
\begin{equation*}
\widehat{Z}(ch_1(u)) = 2^{ - 2 \times 6}  \times \widehat{Z}(u_{00}) \times 2^5  \times 2^5  = \frac{2^2}{0.6};
\end{equation*}
\begin{equation*}
\widehat{Z}(u) = \frac{\widehat{Z}(ch_0(u))  \times 1}{2 \times 0.8 \times 2} + \frac{\widehat{Z}(ch_1(u))  \times 1}{2 \times 0.2 \times 2} = \frac{10}{0.6}.
\end{equation*}

\end{example}

\section{Partial Knowledge Compilation}
\label{sec:partialkc}

We aim at estimating the model numbers for hard CNF formulas, which cannot be compiled within the tolerated time and limited space. Obviously, we cannot accomplish this goal by first compiling a CNF formula into an equivalent Decision-DNNF formula, and then generating a randomly partial Decision-DNNF formula to estimate the number of models (in fact, we can obtain the exact number of models if a full compilation is possible).
We directly generate a randomly partial Decision-DNNF formula from the CNF formula rather than from an equivalent full Decision-DNNF formula. The process of generating a partial Decision-DNNF formula from a CNF formula is called \emph{partial KC}. We propose an algorithm called \PartialKC{} (in Algorithm \ref{alg:PartialKC}) to generate a randomly partial Decision-DNNF formula.

\begin{algorithm}[htbp]
\caption{\PartialKC($\varphi$, $N$)} \label{alg:PartialKC}
\KwIn{a satisfiable CNF formula $\varphi$, a positive integer $N$, and an implicit hash table $H$ storing the results of calling \MicroKC{}}
\KwOut{a randomly partial Decision-DNNF formula corresponding to some Decision-DNNF formula equivalent to $\varphi$}
\lFor {$i=1$ to $N$}{$\MicroKC(\varphi)$}\;
\KwRet $H(\varphi)$
\end{algorithm}

The algorithm \PartialKC{} consists of $N$ callings of \MicroKC{} presented in Algorithm \ref{alg:MicroKC}.
We use the hash table $H$ to store the current compiled result implicitly.
Each calling of \MicroKC{} will update the hash table and thus implicitly enlarge the current compiled result rooted at $H(\varphi)$.
On Lines 2--3, we deal with the cases where $\varphi$ is 1 or a literal.
On Lines 5--15, we deal with the case of the initial calling of \MicroKC{} on $\varphi$.
We decompose $\varphi$ into a set of sub-formulas without sharing variables on Line 5.
We require that all implied literals are extracted.
Therefore, for each sub-formula $\psi_i$ with more than one variable, and each variable $x$, both $\psi_i|_{x = 0}$ and $\psi_i|_{x = 1}$ are satisfiable.
On Lines 7--13, we deal with the case where $\varphi$ is not decomposable.
We create a decision vertex $u$ labeled with a variable $x$ from $Vars(\varphi)$.
We estimate the marginal probability of $\varphi$ over $x$ and sample a Boolean value $b$ with this probability.
Note that the variance of our model counting method strongly depends on the accuracy of the estimate.
We generate children of $u$ and update the information about probability and frequency on Lines 11--13.
On Line 14, we deal with the case where $\varphi$ is decomposable, and recursively call \MicroKC{} for each sub-formula.
On Lines 16--20, we deal with the case on the repeated calling of \MicroKC{} on $\varphi$.

\begin{algorithm}[!htb]
\caption{\MicroKC($\varphi$)} \label{alg:MicroKC}
\KwIn{a satisfiable CNF formula $\varphi$, and an implicit hash table $H$ storing the results of previous callings of \MicroKC{} }
\KwOut{implicitly output a new randomly partial Decision-DNNF formula of $\varphi$ via hash table $H$}
$v \leftarrow H(\varphi)$\;
\lIf {$\varphi$ is true} {$H(\varphi) \leftarrow \Angle{\top}$}\;
\lElseIf {$\varphi$ is a literal $l$} {$H(\varphi) \leftarrow \Angle{l}$}\;
\uElseIf {$v = nil$} {
    Decompose $\varphi$ into $\{\psi_1, \ldots, \psi_n\}$ that includes all implied literals\;
    \uIf {$n = 1$} {
        Choose a variable $x$ from $\varphi$\;
        Let $p$ be an estimate of marginal probability of $\varphi$ over $x$\;
        Sample a Boolean value $b$ with probability $p$\;
        Create a decision vertex $u$ with $sym(u) = x$ \;
        $ch_b(u) \leftarrow \MicroKC(\varphi|_{x = b})$;  $ch_{1-b}(u) \leftarrow \Angle{?}$\;
        $p_{0}(u) \leftarrow 1-p$; $p_{1}(u) \leftarrow p$\;
        $f_{b}(u) \leftarrow 1$; $f_{1-b}(u) \leftarrow 0$\;
    }
    \lElse {$u \leftarrow \Angle{\AND, \{\MicroKC(\psi_i): 1 \le i \le n\}}$}\;
    Replace $v$ with $u$ in the hash table and the resulting partial Decision-DNNF formula\;
}
\uElseIf {$v$ is a decision vertex} {
        Sample a Boolean value $b$ with probability $p_1(v)$\;
        $f_b(v) \leftarrow f_b(v) + 1$\;
        $ch_b(v) \leftarrow \MicroKC(\varphi|_{sym(v) = b})$\;
}
\lElse {$Ch(v) \leftarrow \{\MicroKC(H^{-1}(w)): w \in Ch(v)\}$}\;
\end{algorithm}

It is easy to see that if we have sufficient time and memory, \PartialKC($\varphi$, $\infty$) will output a partial Decision-DNNF formula without any unknown vertex; that is, the result can be seen as a full Decision-DNNF formula that is equivalent to $\varphi$.
If the visiting frequency is not considered, this algorithm converges to a final equivalent Decision-DNNF formula for each CNF formula as $N$ approaches infinity.
Assume that this final Decision-DNNF is rooted at $u'$.
Obviously, the outputs of \PartialKC($\varphi$, $N$) and \RandomPart($u'$, $N$) are independent and identically distributed.
Therefore, we can draw the following conclusion:
\begin{proposition}\label{prop:PartialKC}
Given a CNF formula $\varphi$, there exists some equivalent full Decision-DNNF formula rooted at $u'$ such that \PartialKC($\varphi$, $N$) outputs a partial Decision-DNNF formula which is part of $\vartheta_{u'}$.
\end{proposition}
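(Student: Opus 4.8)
The plan is to identify the ``whole'' $\vartheta_{u'}$ with the full Decision-DNNF that \PartialKC{} would produce if every branch were eventually explored, and then to verify two things: that this limiting object is a genuine full Decision-DNNF equivalent to $\varphi$, and that every finite run is structurally a part of it in the sense of Definition \ref{def:part}. First I would make the limiting compilation precise. The only source of randomness in \MicroKC{} is the sampled Boolean value $b$ on Line 9, which merely decides which child of a decision vertex is expanded first; the choice of splitting variable $x$ on Line 7 and the decomposition $\{\psi_1, \ldots, \psi_n\}$ on Line 5 are deterministic functions of the residual formula, and the hash table $H$ guarantees that the same residual formula is always mapped to the same vertex. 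Consequently the vertex set, the symbols, and the child relationships obtained by exhaustively expanding every branch are independent of the random draws. Because conditioning strictly decreases the number of variables and only finitely many residual formulas arise under repeated conditioning, this exhaustive expansion terminates and yields a finite DAG; I define $u'$ to be its root. This formalizes the convergence remark preceding the proposition.

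Next I would show $\vartheta_{u'}$ is a full Decision-DNNF equivalent to $\varphi$ by structural induction on this DAG. For a decision vertex labeled $x$ whose children are the complete compilations of $\varphi|_{x=0}$ and $\varphi|_{x=1}$, the induction hypothesis gives $ch_b(u') \equiv \varphi|_{x=b}$, and Boole/Shannon expansion yields $\varphi \equiv \varphi|_{x=0} \DEC_x \varphi|_{x=1}$ with $x \notin Vars(\varphi|_{x=b})$; since Line 5 extracts all implied literals, any variable reached in the $n = 1$ case leaves both $\varphi|_{x=0}$ and $\varphi|_{x=1}$ satisfiable, so neither child is $\Angle{\bot}$ and the decision constraint of Definition~1 holds. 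For a decomposition vertex the $\psi_i$ share no variables and satisfy $\varphi \equiv \bigwedge_{i} \psi_i$, which gives both decomposability and equivalence. The leaf cases are immediate.

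Finally I would prove the part relationship as an invariant maintained by every call of \MicroKC{}, arguing by induction on the number of calls (equivalently, on the recursion): for each residual formula $\psi$ already present in $H$, the partial formula rooted at $H(\psi)$ is a part of the complete compilation of $\psi$. When a decision vertex is created, its symbol matches the deterministic choice made in the limit, the expanded child is a part by the inductive hypothesis, and the unexpanded child is $\Angle{?}$, which is a part of anything by the first clause of Definition \ref{def:part}; when a decomposition vertex is created, all children are parts by hypothesis and the arity $|Ch(u)| = |Ch(u')|$ matches; when an existing vertex is revisited (Lines 16--20), one child is refined by a recursive call that is again a part, so the invariant is preserved under both the decision and decomposition cases. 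Applying the invariant to $H(\varphi)$ at termination gives exactly the claim.

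I expect the main obstacle to be the first step: rigorously arguing that the limiting DAG $\vartheta_{u'}$ is well-defined and invariant under the randomness, since this rests on the determinism of the variable- and decomposition-choices together with the caching behaviour of $H$, and it must be set up so that the structural ``part'' induction in the last step lines up with the DAG actually produced by caching rather than with a naively unfolded tree.
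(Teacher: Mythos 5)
Your proposal is correct and follows essentially the same route as the paper, which justifies this proposition only by the informal remark preceding it that \MicroKC{} converges to a full Decision-DNNF formula equivalent to $\varphi$ rooted at some $u'$ as $N$ approaches infinity; you have simply made that remark precise (well-definedness of the limiting DAG via the determinism of the variable/decomposition choices and the caching in $H$, equivalence by Shannon expansion plus decomposability, and the part-relation as an invariant of each call). One minor simplification worth noting: because the statement only asserts the \emph{existence} of $u'$, the canonicity concern you flag as the main obstacle can be avoided entirely by constructing $u'$ a posteriori from the actual output, replacing each unknown vertex by an arbitrary full Decision-DNNF equivalent to its residual formula.
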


Since each calling of \MicroKC($\varphi$) will introduce at most $3 \cdot |Vars(\varphi)|$ edges into the output of \PartialKC($\varphi$, $N$), the output of \PartialKC($\varphi$, $N$) has at most $3 \cdot |Vars(\varphi)| \cdot N$ edges. According to Propositions \ref{prop:unbiased}--\ref{prop:PartialKC}, the following conclusion can be drawn:

\begin{theorem}\label{thm:PartialKC}
Given a CNF formula $\varphi$ and an integer $N$, we can compute an unbiased estimate of the model number of $\varphi$ from the output of \PartialKC($\varphi$, $N$) in $O(|Vars(\varphi)| \cdot N)$.
\end{theorem}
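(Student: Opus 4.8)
The plan is to assemble the theorem from the three preceding propositions together with the edge-count bound stated just above its statement. The estimate we compute is exactly the quantity $\widehat{Z}(u)$ defined in Proposition \ref{prop:unbiased}, evaluated at the root $u$ of the partial Decision-DNNF formula returned by \PartialKC($\varphi$, $N$); so the two things to verify are that this quantity is unbiased for the model number of $\varphi$ and that it can be evaluated within the claimed time bound.

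For unbiasedness, I would first apply Proposition \ref{prop:PartialKC} to obtain a full Decision-DNNF vertex $u'$ that is equivalent to $\varphi$ and of which $\vartheta_u$ is a part, so that $\Angle{u, u'}$ is a part-whole pair over $X = Vars(\varphi)$. Since $\vartheta_{u'} \equiv \varphi$, their model sets over $X$ coincide and hence $Z(u') = Z_X(\varphi)$, i.e.\ $Z(u')$ is precisely the model number we wish to estimate. It then remains to invoke Proposition \ref{prop:unbiased}, which asserts that $\widehat{Z}(u)$ is an unbiased estimate of $Z(u')$; combining the two facts yields that $\widehat{Z}(u)$ is an unbiased estimate of the model number of $\varphi$.

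The delicate point, and the step I expect to be the main obstacle, is that Proposition \ref{prop:unbiased} is phrased for a randomly partial Decision-DNNF produced by the sampling-based \RandomPart{} procedure, whereas our $u$ is produced directly by \PartialKC{} from the CNF formula. I would close this gap by appealing to the observation recorded just before Proposition \ref{prop:PartialKC}: the output of \PartialKC($\varphi$, $N$) and the output of \RandomPart($u'$, $N$) are identically distributed, so the visiting frequencies $f_b$ and the marginal-probability labels $p_b$ attached to the decision vertices of $\vartheta_u$ carry exactly the joint distribution assumed in Proposition \ref{prop:unbiased}. Fleshing this out amounts to a bookkeeping argument matching each calling of \MicroKC{} on $\varphi$ to one run of \SampleDNNF{} on the eventual full compilation $\vartheta_{u'}$, so that a decision vertex is visited along branch $b$ in one process exactly when it is in the other.

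For the complexity bound, I would use the edge count already established: each of the $N$ callings of \MicroKC($\varphi$) adds at most $3 \cdot |Vars(\varphi)|$ edges, so $\vartheta_u$ has at most $3 \cdot |Vars(\varphi)| \cdot N$ edges and hence $O(|Vars(\varphi)| \cdot N)$ vertices. By Proposition \ref{prop:unbiased} the estimate $\widehat{Z}(u)$ is computed by a single bottom-up pass that spends constant work per vertex and per incident edge (one of the leaf, decision, or $\AND$ cases of the recursion), so the total running time is linear in the size of $\vartheta_u$, namely $O(|Vars(\varphi)| \cdot N)$, as claimed.
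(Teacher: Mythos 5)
Your proposal is correct and follows essentially the same route as the paper, which obtains the theorem by combining Proposition \ref{prop:PartialKC} (existence of an equivalent full Decision-DNNF of which the output is a part), Proposition \ref{prop:unbiased} (unbiasedness of the recursive bottom-up estimate, bridged to \PartialKC{} via the observation that its output is identically distributed with that of \RandomPart{} on the eventual full compilation), and the bound of $3 \cdot |Vars(\varphi)| \cdot N$ edges for the linear-time claim. Your write-up merely makes these steps more explicit than the paper's one-line derivation.
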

\begin{example}\label{exam:PartialKC}
We run \PartialKC{} on the formula $\varphi = (x_2 \OR x_4 \OR x_6 ) \AND (x_2 \OR \NOT x_4 \OR \NOT x_6 ) \AND (x_1 \OR \NOT x_2  \OR x_3 \OR x_5 ) \AND (\NOT x_1 \OR x_4 \OR x_6 ) \AND (\NOT x_1 \OR \NOT x_4  \OR \NOT x_6 ) \AND (\NOT x_1 \OR x_2 \OR \NOT x_4 ) \AND (\NOT x_1 \OR x_2 \OR x_4 ) \AND (\NOT x_1 \OR \NOT x_2 \OR x_5 )$. Assume that the variable with the minimum subscript is chosen on Line 7.
For the first calling of \MicroKC($\varphi$), the condition on Line 9 is satisfied. We choose $x_1$ on Line 7, and estimate that the marginal probability of $\varphi$ over $x_1$ is about 0.2. We assume that 0 is sampled on Line 9. Then \MicroKC($\varphi_0$) is recursively called, where $\varphi_0 = (x_2 \OR x_4 \OR x_6 ) \AND (x_2 \OR \NOT x_4 \OR \NOT x_6 ) \AND (\NOT x_2 \OR x_3 \OR x_5 )$. Similarly, the condition on Line 6 is satisfied. We choose $x_2$ on Line 7, and estimate that the marginal probability of $\varphi_0$ over $x_2$ is about 0.5. We assume that false is sampled on Line 9. Then \MicroKC($\varphi_{00}$) is recursively called, where $\varphi_{00} = (x_4  \OR x_6 ) \AND (\NOT x_4  \OR \NOT x_6 )$. Similarly, the condition on Line 6 is satisfied. We choose $x_4$ on Line 7, and estimate that the marginal probability of $\varphi_{00}$ over $x_4$ is about 0.6. We assume that 1 is sampled on Line 9. Then \MicroKC($\NOT x_6$) is recursively called, and then $\Angle{\NOT x_6}$ is returned.
\MicroKC($\varphi_{00}$) returns $v_{00} = \Angle{x_4, \Angle{?}, \Angle{\NOT x_6}}$ with $f(v_{00}) = \Angle{0, 1}$ and $p(v_{00}) = \Angle{0.4, 0.6}$.
\MicroKC($\varphi_{0}$) returns $v_{0} = \Angle{x_2, v_{00}, \Angle{?}}$ with $f(v_{0}) = \Angle{1, 0}$ and $p(v_{00}) = \Angle{0.5, 0.5}$.
\MicroKC($\varphi$) returns $v = \Angle{x_1, v_0, \Angle{?}}$ with $f(v) = \Angle{1, 0}$ and $p(v) = \Angle{0.8, 0.2}$.
For the second calling of \MicroKC($\varphi$), the condition on Line 16 is satisfied. We assume that 1 is sampled on Line 17. Then \MicroKC($\varphi_1$) is recursively called, where $\varphi_1 = (x_2 \OR x_4  \OR x_6 ) \AND (x_2 \OR \NOT x_4  \OR \NOT x_6 ) \AND (x_4  \OR x_6 ) \AND (\NOT x_4  \OR \NOT x_6 ) \AND (x_2 \OR \NOT x_4 ) \AND (x_2 \OR x_4 ) \AND (\NOT x_2 \OR x_5 )$. We decompose $\varphi_1$ into $\{(x_4  \OR x_6 ) \AND (\NOT x_4  \OR \NOT x_6 ), x_2, x_5\}$. Then \MicroKC($\varphi_{00}$), \MicroKC($x_2$), and \MicroKC($x_5$) are recursively called. The last two callings return $\Angle{x_2}$ and $\Angle{x_5}$. For the calling of \MicroKC($\varphi_{00}$), the condition on Line 16 is satisfied. We assume that 1 is sampled on Line 17. Then \MicroKC($\NOT x_6$) is recursively called, and then $\Angle{\NOT x_6}$ is returned.
\MicroKC($\varphi_{00}$) returns $v_{00} = \Angle{x_4, \Angle{?}, \Angle{\NOT x_6}}$ with $f(v_{00}) = \Angle{0, 2}$ and $p(v_{00}) = \Angle{0.4, 0.6}$.
\MicroKC($\varphi_{1}$) returns $v_{1} = \Angle{\AND, \{v_{00}, \Angle{x_2}, \Angle{x_5}\}}$.
\MicroKC($\varphi$) returns $v = \Angle{x_1, v_0, v_1}$ with $f(v) = \Angle{1, 1}$ and $p(v) = \Angle{0.8, 0.2}$.
Finally, we generate the partial Decision-DNNF in Figure \ref{fig:DecDNNF:b}.
\end{example}

\subsection{KC Technologies for Reducing the Variance}

\MicroKC{} can be seen as a sampling procedure equipped with KC technologies, and the variance of the randomly partial Decision-DNNF depends on three main factors.
Firstly, the variance of a single calling of \MicroKC{} depends on the number of sampling Boolean values on Lines 9 and 17.
The less the samples from the Bernoulli distributions, the smaller the variance.
Secondly, the variance also depends on the number of \MicroKC{} callings when fixing their total time.
We expect to accelerate the running of \MicroKC{}.
Finally, the variance depends on the quality of proposal distribution.
In the following paragraphs, three KC technologies, which can be used for reducing the variance, are explained.

The first technology is dynamic decomposition on Line 5. We employ a SAT solver to compute the implied literals of a formula, and use these implied literals to simplify the formula. Then we decompose the residual formula according to the corresponding primal graph. We can reduce the variance from the following three aspects:
\begin{compactitem}[$\bullet$]
\item After extracting all implied literals, on the one hand, we can reduce the number of sampling Boolean values from the Bernoulli distributions. On the other hand, for each $\psi_i$ on Line 5 with more than one variable and each variable $x \in Vars(\psi_i)$, we have that $\psi_i \not\models \NOT x$ and $\psi_i \not\models x$. That is, the sampling on Lines 9 and 17 is backtracking-free, which remedies the rejection problem of sampling.
\item Similar to the approach used in \cite{Gogate:Dechter:11}, we reduce the variance by sampling from a subset of the variables, a technique also known as Rao-Blackwellization. In our implementation, Line 2 in Algorithm \ref{alg:MicroKC} is replaced by the following statement: if $\varphi$ is trivial, then we assign $H(\varphi)$ as a known vertex labeled with $Z(\varphi)$. We can detect the trivialness of a CNF formula based on its variable number or some other parameters. After decomposing, we can detect more than one trivial formula, and thus greatly reduce the variance.
\item More virtual samples can be provided to reduce the variance. For example, sampling twice for each sub-formula on Line 14 yields $2^n$ virtual samples.
\end{compactitem}

The second technology is the component caching implemented in hash table $H$.
In different callings of \MicroKC{}, the same sub-formula may need to be processed several times.
With the use of component caching, we can save the time of computing implied literals, and thus accelerate the sampling.
On the other hand, we can reduce the variance by merging the callings of \MicroKC{} on the same formula.
Consider Example \ref{exam:PartialKC} again. We call \MicroKC{} twice on $\varphi_{00}$.
The corresponding variance is obviously smaller than that of a single calling of \MicroKC{}.
Then, the variance of a randomly partial Decision-DNNF formula is reduced.
In our implementation, the advanced component caching in sharpSAT \cite{sharpSAT} was adopted.
If the use of hash table $H$ in \MicroKC{} is canceled, then each calling of \MicroKC{} will return a sample.
These samples can be used to estimate the model number, and the resulting counter will be referred to as \PartialKC{}-single.
In the following section, it will be shown that hash table $H$ can accelerate the running of \PartialKC{}, in contrast to \PartialKC{}-single.

The third technology is that of generating the proposal distribution based on another form of partial KC.
Without consideration of the dynamic decomposition, each calling of \MicroKC{} can be seen as a process of importance sampling, where the resulting partial Decision-DNNF is treated as the proposal distribution.
Similar to importance sampling, it is easy to see that the variance of using \PartialKC{} to estimating model number depends on the quality of estimating the marginal probability on Line 8.
If the estimated marginal probability is equal to the true one, \PartialKC{} will yield an optimal (zero variance) estimate.
It is obvious that the exact marginal probability can be calculated via an equivalent Decision-DNNF formula.
However, we cannot afford the computational cost for compiling a CNF formula into full Decision-DNNF.
Alternatively, we estimate the marginal probability by compiling the formula into a partial Decision-DNNF.
The compiling algorithm is presented in Algorithm \ref{alg:PartialOBDDL}, which is based on the classical DPLL algorithm.
To avoid a high computational cost, we search for several variables in this DPLL-based algorithm.
A small size of partial assignments often impedes the further decomposition of the CNF formula, except for the implied literals.
Therefore, we choose to compile the CNF formula into a partial \OBDDL{}, which is a special class of partial Decision-DNNF formula.
Firstly, we generate a partial \OBDDL{}.
Secondly, we substitute $\Angle{\top}$ for each vertex $\Angle{?}$, which turns the resulting DAG into a full \OBDDL{}.
The marginal probability of the latter is used to estimate that of a CNF formula over $x$.
Consider the formula $\varphi$ in Example \ref{exam:PartialKC} again. Let $X = \{x_1, x_2\}$ with $x_1 \prec x_2$. Then \PartialOBDDL($\varphi$, $X$, $\prec$) will output $\Angle{x_1, \Angle{x_2, \Angle{?}, \Angle{?}}, \Angle{\AND, \{\Angle{x_2}, \Angle{?}\}}}$. Therefore, the marginal probability of $\varphi$ over $x_1$ can be roughly estimated as 0.33.
\begin{algorithm}[!htp]
\caption{\PartialOBDDL($\varphi$, $X$, $\prec$)} \label{alg:PartialOBDDL}
\KwIn{a CNF formula $\varphi$, a subset $X$ of $Vars(\varphi)$, and a total order $\prec$}
\KwOut{a partial \OBDDL{} over $\prec$ }
Let $L$ be a set of implied literals over $X$ computed by implicit BCP \;
Simplify $\varphi$ using the implied literals in $L$ \;
\lIf {$\varphi$ has some empty clause} {\KwRet $\Angle{\bot}$} \;
\lElseIf {$\varphi = \emptyset$} {$u \leftarrow \Angle{\top}$} \;
\lElseIf {$X = \emptyset$} {$u \leftarrow \Angle{?}$} \;
\Else {
    $x \leftarrow \min_{\prec}X\cap Vars(\varphi)$ \;
	$v \leftarrow \PartialOBDDL( \varphi |_{x=0}, X \setminus \{x\}, \prec )$ \;
	$w \leftarrow \PartialOBDDL( \varphi |_{x=1}, X \setminus \{x\}, \prec )$ \;
    $u \leftarrow \Angle{x, v, w}$ \;
}
Let $V$ be the set of vertices representing $L$ \;
\lIf {$V = \emptyset$} {\KwRet $u$} \;
\lElse {\KwRet $\Angle{\AND, V \cup \{u\}}$} \;
\end{algorithm}

\section{Preliminary Experimental Results}
\label{sec:experiment}

We mainly focus on counting models for the instances that are hard to be solved for exact counters. To allow comparison, approximate model counters are evaluated by comparing the quality of the generated lower-bounds (the higher the better). We use the scheme described in \cite{Gomes:etal:07} to compute lower bounds. According to Markov's inequality, for an unbiased or under estimate $\widehat{Z}$ of count $Z$, $Pr[\widehat{Z} > cZ] < 1/c$. Given $m$ estimates of $Z$, let $\widehat{Z}^*$ be the minimum one. For the event that $\widehat{Z}^*/c$ does not exceed $Z$, the respective probability does not exceed $1-c^{-m}$. Therefore, if $\delta \le 1-c^{-m}$, then $\widehat{Z}^*$ will provide a lower bound of $Z$ with probability at least $\delta$. In our experiments, we set the confidence $\delta = 0.99$ and thus set $m=7$ and $c = 1.9307$. Therefore, the minimum estimate of seven ones is divided by 1.9307 to denote the lower bound of the model number. Specifically, the lower bounds obtained by counters \PartialKC{}, SampleSearch/LB \cite{Gogate:Dechter:11}, and SearchTreeSampler \cite{Ermon:etal:13} were compared.

Our benchmark suite consisted of problems arising from four domains. The first one involved the Field Programmable Gate Array (FPGA) routing instances, which were constructed by reducing FPGA detailed routing problems into CNF formulas. The remaining three domains of benchmark instances corresponded to the Langford¡¯s problem domain, the normalized Latin squares domain, and the ISCAS89 combinational circuits, respectively. The first, second, and third domains are mostly involved in the testing of approximate counters, while the fourth one is frequently used to test full compilers.
\begin{table*}[!htbp]
\centering
\small
\aboverulesep = 0.2ex
\belowrulesep = 0.2ex
\caption{Comparative lower bounds on model numbers and numbers of samples output by \PartialKC{}, SampleSearch/LB, and SearchTreeSampler} \label{tab:expri:overall}
\renewcommand{\arraystretch}{1.35}
\begin{tabularx}{\linewidth}{c@{\hspace{1pt}}c@{\hspace{1pt}}*{2}{@{\hspace{2pt}}>{\centering\arraybackslash}c@{\hspace{2pt}}}*{4}{@{\hspace{2pt}}>{\centering\arraybackslash}X@{\hspace{2pt}}}}\toprule
\multirow{2}*{Problem} & \multirow{2}*{$\Angle{n, c, w}$} & \multicolumn{2}{c}{\PartialKC{}} & \multicolumn{2}{c}{SampleSearch/LB} & \multicolumn{2}{c}{SearchTreeSampler} \\\cmidrule{3-8}
& & $Z$ & $N$ (+) & $Z$ & $N$ & $Z$ & $N$ \\\midrule
9symml\_gr\_rcs\_w6     & $\Angle{1554, 29119, 575}$ & \textbf{5.98E+84} & 73803 (152\%) & 4.48E+82 & 205285 & 4.86E+83 & 19417  \\
apex7\_gr\_2pin\_w5     & $\Angle{1983, 15358, 168}$ & \textbf{1.60E+95} & 19579 (20\%) & 6.31E+92 & 82881 & 3.82E+91 & 9526 \\
c880\_gr\_rcs\_w7       & $\Angle{4592, 61745, 925}$ & \textbf{1.18E+267} & 4312 (22\%) & 5.1316E+252 & 6083 & 6.26E+266 & 1729 \\
vda\_gr\_rcs\_w9        & $\Angle{6498, 130997, 2132}$ & \textbf{5.62E+310} & 165 (0\%) & n/a & 0 & inf & n/a \\\midrule[0.01em]
lang16      & $\Angle{1024, 32320, 408}$ & \textbf{3.35E+08} & 71645 (24\%) & 1.95E+08 & 180675 & n/a & 0 \\
lang19      & $\Angle{1444, 54226, 602}$ & \textbf{2.61E+11} & 54371 (41\%) & 5.35E+10 & 40081 & n/a & 0 \\\midrule[0.01em]
ls12-norm   & $\Angle{1221, 11231, 821}$ & 4.04E+42 & 140 (9\%) & 2.05E+42 & 27885 & \textbf{6.97E+43} & 150 \\
ls13-norm   & $\Angle{1596, 16248, 1092}$ & 1.80E+53 & 46 (7\%) & 2.52E+52 & 1752 & \textbf{4.88E+54} & 7 \\\midrule[0.01em]
s13207      & $\Angle{8651, 19116, 76}$ & \textbf{2.72E+210} & 0.2G (\textgreater{}1000\%) & 1.35E+209 & 267925 & 1.1682E+207 & 1145 \\
s15850      & $\Angle{10383, 23417, 96}$ & \textbf{2.79E+183} & 18966 (247\%) & 1.07E+180 & 164460 & 6.36E+181 & 818 \\\bottomrule
\end{tabularx}
\end{table*}

We conducted experiments on a computer with a 64-bit eight-core 3.4 GHz CPU and 16GB RAM.
Each run of the model counter for each instance was allowed five hours.
The results obtained for ten representative instances are listed in Table \ref{tab:expri:overall}.
It is noteworthy that none of these instances, except for s13207, can be solved in five hours by the state-of-the-art exact counter sharpSAT.
The second column contains various statistical information about the instances: $n$ is the number of variables, $c$ is the number of clauses and $w$ is the treewidth computed using the min-fill heuristic.
In the remaining columns, $Z$ indicates the lower boound, $N$ indicates the total calling number of \MicroKC{}, the total number of samples, and the total number of sets of solutions for \PartialKC{}, SampleSearch/LB and SearchTreeSampler in the corresponding seven runs, respectively.
It is easy to see that each calling of \MicroKC{} can generate a sample.
Figures in brackets corresponding to $N$ (+) column indicate the increase in the sampling rate of \PartialKC{}, as compared to \PartialKC{}-single, in percent.
For the instance on which SearchTreeSampler reported \textquoted{inf}, we guess that some variable of floating point number in the program overflowed.

Our experimental results show that \PartialKC{} succeeded in reporting lower bound for each instance, while SampleSearch/LB and SearchTreeSampler failed to report any results in one and two cases, respectively, out of the ten instances.
As compared to SampleSearch/LB, \PartialKC{} yielded better lower bounds for all instances.\footnote{Actually, SampleSearch/LB yielded worse lower bounds for all instances than \PartialKC{}-single, which yielded worse lower bounds for most instances than \PartialKC{}.}
For five instances, SampleSearch/LB underestimated the counts by at least one order of magnitude, in contrast to \PartialKC{}.
We can also observe that \PartialKC{} computed higher lower bounds than SearchTreeSampler on eight out of the ten instances, and the former scales better than the latter.
The performance of \PartialKC{} in the normalized Latin squares domain is worse than that of SearchTreeSampler, because the high costs of computing implied literals for the instances under partial assignments make the sampling very slow.
The experimental results also show that \PartialKC{} can generate more samples than \PartialKC{}-single for all instances except vda\_gr\_rcs\_w9, due to the KC technology component caching.
For three instances, samples generated by \PartialKC{} are at least twice as many as those generated by \PartialKC{}-single.
As applied to the same problem, it is easy to understand that the sampling rate of \PartialKC{} increases with the running time.
Therefore, \PartialKC{} is more effective for relatively easy problems which are still hard for exact model counting.

\section{Conclusions}
\label{sec:conclude}

A new representation partial Decision-DNNF, and a new approach \PartialKC{} for approximate model counting were proposed, to mitigate the explosion in sizes, which is intrinsic to all full KC methods.
For each CNF formula, \PartialKC{} can generate a randomly partial Decision-DNNF formula, which can be used to compute an unbiased estimate of the model number of the corresponding formula.
A single calling of \PartialKC{} consists of multiple callings of \MicroKC{}, and each of the latter callings can be seen as a process of importance sampling equipped with KC technologies.
The proposal distribution chosen in \MicroKC{} is based on the partial Decision-DNNF formula itself.
Each decision vertex in partial Decision-DNNF is labeled with a probability which is estimated via a partial \OBDDL{} generating algorithm.
The experimental results show that \PartialKC{} are more accurate than both SampleSearch and SearchTreeSampler, and scales better than SearchTreeSampler.
This study build new connections between the two fields of model counting and KC, which makes that the progress of approximate model counting can directly benefit from the development of KC technologies.

\subsubsection*{Acknowledgements}

We are grateful to the anonymous reviewers who suggested improvements. We would also thank Stefano Ermon for providing useful information about their software SearchTreeSampler. This research is supported by the National Natural Science Foundation of China under grants 61402195 and 61472161, and by the China Postdoctoral Science Foundation under grant 2014M561292.

\newpage
\bibliographystyle{elsarticle-num}
\bibliography{mypublications,papers,books,softwares}







\end{document}